\newtheorem{theorem}{Theorem}
\newtheorem{assumption}[theorem]{Assumption}
\def\BibTeX{{\rm B\kern-.05em{\sc i\kern-.025em b}\kern-.08em
    T\kern-.1667em\lower.7ex\hbox{E}\kern-.125emX}}
\begin{document}

\title{SGFusion: Stochastic Geographic Gradient Fusion in Federated Learning}



\author{%
  Khoa Nguyen$^{\S*}$,
  Khang Tran$^{\S*}$,
  NhatHai Phan$^{\S}$,
  Cristian Borcea$^{\S}$\\
  Ruoming Jin$^{\P}$,
  Issa Khalil$^{\Im}$\\
\small
  $^\S$New Jersey Institute of Technology, Newark, NJ, USA, 
  $^\P$Kent State University, Kent, OH, USA\\
  $^\Im$Qatar Computing Research Institute, HBKU, Doha, Qatar\\
  E-mail: \{nk569, kt36, phan, borcea\}@njit.edu, rjin1@kent.edu, ikhalil@hbku.edu.qa \\
  $^*$Co-first authors.\\
  \vspace{-3em}
}

\maketitle

\begin{abstract}
This paper proposes Stochastic Geographic Gradient Fusion (\textbf{SGFusion}), a novel training algorithm to leverage the geographic information of mobile users in Federated Learning (FL). SGFusion maps the data collected by mobile devices onto geographical zones and trains one FL model per zone, which adapts well to the data and behaviors of users in that zone. SGFusion models the local data-based correlation among geographical zones as a hierarchical random graph (HRG) optimized by Markov Chain Monte Carlo sampling. At each training step, every zone fuses its local gradient with gradients derived from a small set of other zones sampled from the HRG. This approach enables knowledge fusion and sharing among geographical zones in a probabilistic and stochastic gradient fusion process with self-attention weights, such that \textit{``more similar''} zones have \textit{``higher probabilities''} of sharing gradients with \textit{``larger attention weights.''} SGFusion remarkably improves model utility without introducing undue computational cost. Extensive theoretical and empirical results using a heart-rate prediction dataset collected across 6 countries show that models trained with SGFusion converge with upper-bounded expected errors and significantly improve utility in all countries compared to existing approaches without notable cost in system scalability.
\end{abstract}

\begin{IEEEkeywords}
Geographical FL, Differential Privacy
\end{IEEEkeywords}

\section{Introduction}
Although Federated learning (FL) \cite{mcmahan2017communication} has many applications for mobile users \cite{Zhang2022IoT}, we still need to find a practical solution that obtains good model accuracy while adapting to user mobility behavior, scales well as the number of users increases, and protects user data privacy, which is especially important for mobile sensing applications (e.g., mobile health). 
A common approach toward this goal is to group users into different clusters, each of which is trained in an FL manner to achieve better model performance \cite{NEURIPS2020_ifca, ruan2022fedsoft, long2023multi}. There are different clustering criteria, such as using the objective function on the users' local data distribution \cite{NEURIPS2020_ifca, qu2022convergence}, the gradient similarity \cite{long2023multi, ruan2022fedsoft} or the users' geographical location \cite{jiang2023zone} to reduce the discrepancy of model utility among users and clusters. Among these approaches, leveraging users' geographical information is the most suitable approach to divide the physical space into geographical zones of users mapped to a mobile-edge-cloud FL architecture \cite{jiang2023zone} since it scales well with the increasing number of users in real-world settings. 
By enabling geographical zones of (mobile) users to share gradients with their geographical neighboring (adjacent) zones, FL has shown outstanding performance in model utility and server scalability in real-world deployments of mobile sensing applications, e.g., heart rate prediction and human activity recognition.

\textbf{Challenges.} Although leveraging users' geographical information is promising for real-life FL deployments on mobile devices, the trade-off between model utility and system scalability has yet to be addressed in a systematic way. Specifically, there is a lack of optimized geographical training algorithms. As a result, geographical zones without sufficient training data or appropriate geographical correlations with other zones often have poor model utility.

Addressing this problem is challenging, given the complex and dynamic correlation among geographical zones. In fact, a specific zone can obtain shared gradients from all other zones. However, it will significantly increase computational complexity on all users and edge devices managing geographical zones while offering negligible model utility improvements.
The trade-off between model utility and system scalability raises a fundamental question: \textit{``How to share gradients among geographical zones to optimize model utility without affecting system scalability?''} 

\textbf{Contributions.} To systematically answer this question, we propose Stochastic Geographic Gradient Fusion (\textbf{SGFusion}), a novel FL training algorithm for mobile users. SGFusion models the local data-based correlation among geographical zones of users as a hierarchical random graph (HRG) \cite{clauset2006structural}, optimized by Markov Chain Monte Carlo sampling. HRG represents the probability for one zone to share its gradient with another zone. At a training step, every zone can fuse its local gradients with gradients derived from a set of other zones sampled from the HRG. This approach enables knowledge fusion and sharing among geographical zones in a probabilistic and stochastic gradient fusion process with dynamic attention weights, such that \textit{``more similar''} zones have \textit{``higher probabilities''} of sharing gradients with \textit{``larger attention weights.''} In fact, using HRG reduces and structures the search space, allowing us to identify similar zones better. Zone sampling enables us to reduce the computational cost further. As a result, SGFusion can remarkably improve model utility without introducing undue computational cost. 

Extensive theoretical and empirical results show that models trained with SGFusion converge with upper-bounded expected errors. SGFusion significantly improves model utility without notable cost in system scalability compared with existing approaches. The experiments on heart rate prediction demonstrate that, among the total of 115 zones across 6 countries, \textbf{more than double the number of zones benefit from SGFusion} compared with the state-of-the-art clustering-based FL approaches and their variants \cite{jiang2023zone,wang2023delta}, without slowing the convergence process. SGFusion improves the aggregated model utility across six countries by $3.23\%$ compared with existing approaches. Here is the code for SGFusion: \href{https://anonymous.4open.science/r/SGFusion-BC13/README.md}{\textbf{Code}}.

\section{Background and Related Work}

In FL, a coordination server and a set of $N$ users jointly train a model $h_\theta$, where $\theta$ is a vector of model weights \cite{mcmahan2017communication}. 


\textbf{Clustering-based FL.} Instead of training one global model \cite{mcmahan2017communication}, the service provider in clustering-based FL divides the users into clusters and trains an FL model for each cluster to enhance the model's performance under non-independent and identically distributed (non-IID) data distribution across users \cite{briggs2020federated,ghosh2020efficient,li2023hierarchical,morafah2023flis,Li2022cluster}. The challenges of non-IID data could also be mitigated through new aggregating methods \cite{karimireddy2021scaffoldstochasticcontrolledaveraging,li2020federatedoptimizationheterogeneousnetworks,li2021fedbnfederatedlearningnoniid}. A pioneering work in this setting \cite{briggs2020federated} leverages the hierarchical clustering method to cluster the clients based on their updating gradients. Similarly, Ghosh et al. \cite{ghosh2020efficient} proposed sending multiple models associated with different data distribution to the clients and letting them choose the model that minimizes their data loss. Although these works mitigate the non-IID problems, they incur high computation overhead on the users' devices proportional to the number of clusters, hence limiting the scalability of the existing systems. 

\textbf{Decentralized FL.} Another line of work addressing the scalability problem proposes to replace the centralized coordinating server with multiple servers, each of which being associated with a cluster~\cite{ouyang2023cluster, zhang2021optimizing, long2023multi,wu2023topology,beltran2023decentralized}. For instance, \cite{long2023multi} proposed a multi-center FL setting to personalize FL models to different users' data distribution while avoiding high computation overhead. Similarly, Zhang et al. \cite{zhang2021optimizing} introduced a three-layer collaborative FL architecture with multiple edge servers to learn ML models for IoT devices in FL settings. However, existing decentralized FL approaches usually disregard the users' geographical locations; as a result, they increase communication overhead and do not adapt well to user mobility behaviors in a mobile-edge-cloud FL architecture in practice.

\textbf{Zone FL} is a recent effort to make FL deployments practical in real-world scenarios, which aims at high model utility by adapting well to user mobility behavior while scaling well to the increasing number of users \cite{jiang2023zone}. To achieve this goal, Zone FL divides the physical space into non-overlapping geographical zones and maps the correlation between users, zones, and FL models to a mobile-edge-cloud FL architecture. 


In this setting, an edge-device manages a zone-FL model trained for users whose local data is collected in that zone on their mobile devices. The cloud manages the general zone structure, such as zone granularity. Given a specific zone $z$, the goal is to train an FL model for the zone, $\theta_z$, minimizing an objective function $F_z = \frac{1}{m_z}\sum_{u \in z}\ell(\theta_z, D_u)$, where $\ell(\cdot, \cdot)$ is a loss function (e.g., cross-entropy function) and $m_z$ is the number of users in $z$, using data $D_u$ collected by users $u$ in the zone $z$: $\theta_z^* = \arg\min_{\theta_z}F_z$. Given a set of zones $z \in Z$ where $|Z|$ denotes the number of zones, the goal of Zone FL is to find a set of $\{\theta^*_z\}_{z \in Z}$ that minimizes the average objective function of all the zones, as follows:
\begin{equation}
\{\theta_z^*\}_{z\in Z} = \arg\min_{\{\theta_z\}_{z \in Z}}\frac{1}{|Z|}\sum_{z \in Z} F_z.
\label{Zone-FL Loss}
\end{equation}


To enhance model utility by fostering knowledge fusion among zone models in Eq. \ref{Zone-FL Loss}, the current training algorithm in Zone FL, called Zone Gradient Diffusion (ZGD), enables every zone to fuse its local gradient with gradients derived from its geographical neighboring/adjacent zones. At round $t$, the zone $z$ sends its model weights $\theta_z^t$ to its neighboring zones, denoted as $\mathcal{N}(z)$. Then, the neighboring zones derive their local gradients by using the shared model weight $\theta_z^t$ and their local data, as follows:
\begin{equation}
\forall z' \in \mathcal{N}(z): \nabla_{\theta_z^t}F_{z'} = \frac{1}{m_{z'}}\sum_{u \in z'}\nabla_{\theta_z^t}\ell(\theta_z^t, D_u),
\end{equation}
where $\nabla_{\theta_z^t}\ell(\theta_z^t, D_u)$ is the gradient derived by the user $u$ using the model weight $\theta_z^t$ and their local data $D_u$. 



After receiving all the gradients $\{\nabla_{\theta_z^t}F_{z'}\}$ from neighboring zones $z' \in \mathcal{N}(z)$, the zone $z$ will fuse these gradients associated self-attention coefficients $\{\lambda_{z, z'}\}$ with its local gradient to update its zone model, as follows:
\begin{equation}
\theta_z^{t+1} = {\theta_z^{t} - \eta_t\big[\nabla_{\theta_z^t}F_{z} + \sum_{z' \in \mathcal{N}_z^t} \lambda_{z, z'}\nabla_{\theta_z^t}F_{z'}\big]},
\label{ZGD Formula}
\end{equation}
where $\eta_t$ is the learning rate at round $t$, and the self-attention coefficients capture the normalized similarities between the local gradient of $z$ and the shared gradients from its neighboring zones, as follows: $\lambda_{z,z'} = \frac{\exp{(e_{z, z'})}}{\sum_{\tilde{z} \in \mathcal{N}(z)} \exp{(e_{z, \tilde{z}})}}$, where $e_{z, z'} = \sigma\big(\langle\nabla_{\theta^t_z}F_z; \nabla_{\theta^t_z}F_{z'}\rangle\big)$, $\sigma(\cdot)$ is the sigmoid function and $\langle\cdot;\cdot\rangle$ is the inner product. 

The key idea of Eq. \ref{ZGD Formula} is that the \textit{``more similar''} the gradients of a neighboring zone $z'$ are with those of zone $z$, the \textit{``higher the coefficient''} $\lambda_{z,z'}$ is; thus, resulting in a larger influence of zone $z'$ on the model training of zone $z$.

\section{Stochastic Geographic Gradient Fusion}

Although Zone FL is better at addressing the trade-off between model utility and system scalability compared with classical FL, there is still a fundamental question that it did not address: \textit{``With which zones should a zone $z$ fuse its local gradients at a given training round to achieve high model utility without undue computational cost, and how?''}
Answering this question is non-trivial.  First, fusing knowledge via shared gradients from neighboring zones does not necessarily improve the model utility of a specific zone, given diverse local data distributions among these zones. It is well-known that diverse local data distributions can cause scattered gradients, thus degenerating the FL model utility \cite{Li2020FLChallenges}. Second, a deterministic gradient descent (GD) fusion approach, which uses fixed neighboring zones across training rounds, is not well-optimized since it does not consider the correlation between zone $z$ and all the other zones. A naive solution to this problem is to fuse a zone $z$'s local gradients with the gradients from all geographical zones at each training round. However, applying this (deterministic) GD significantly increases the training cost of $|Z|$ zone-FL models after $T$ training rounds by $|Z|^2 \times N$ times compared with classical FL training \cite{mcmahan2017communication}, where $N$ is the total number of users in all the zones, and $\frac{|Z|^2 \times N}{\sum_{z \in Z}\sum_{z' \in \mathcal{N}(z)}m_z'}$ times compared with ZGD \cite{jiang2023zone}. Therefore, existing training algorithms either degrade the model utility or affect the system scalability.

\textbf{SGFusion Overview.} To address these problems, we propose stochastic geographic gradient fusion (SGFusion), a novel FL training algorithm for mobile users. SGFusion uses a hierarchical random graph (HRG) \cite{clauset2006structural} to model the correlations among zones as sampling probabilities based on the distances between their local data distributions. HRG is scalable due to its ability to efficiently represent statistical correlations among zones when the number of zones increases, making it efficient for settings with large numbers of users.
Then, SGFusion optimizes the HRG by Markov Chain Monte Carlo (MCMC) sampling \cite{newman1999monte}. At each training round, each zone $z$ samples a small set of zones given the HRG to fuse its local gradient with local gradients derived from these zones. This method enables knowledge fusion and sharing among zones, such that \textit{``more similar''} zones have \textit{``higher probabilities''} of sharing gradients with \textit{``larger attention weights''} at each training round of each zone-FL model.
As a result, SGFusion reduces the data diversity and scattered shared gradients while providing sufficient knowledge fused from other zones to improve zone models $\{\theta^*_z\}_{z \in Z}$ in Eq. \ref{Zone-FL Loss}.

\begin{figure}[t]
    \centering  
    \includegraphics[scale=0.37]{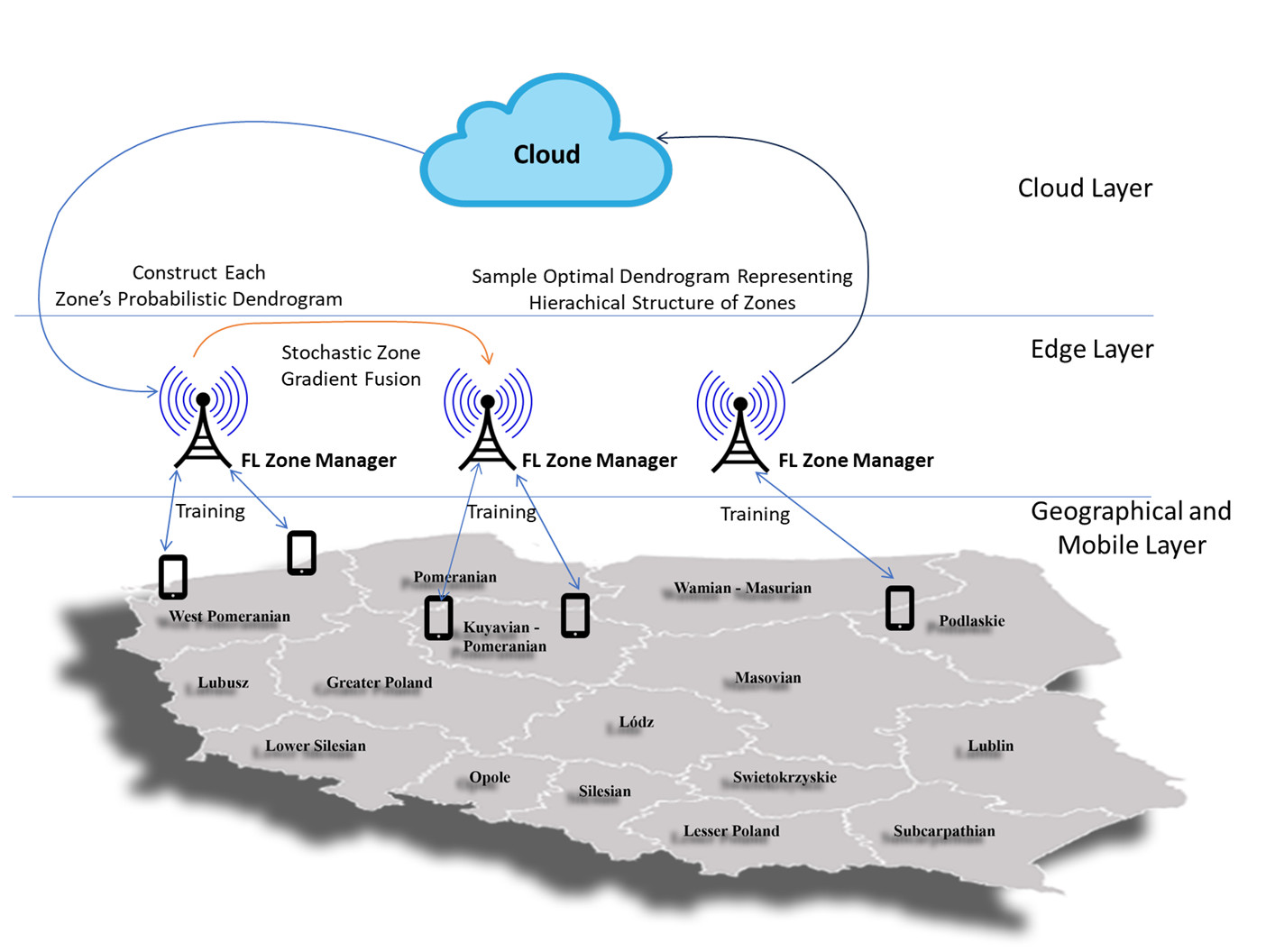} 
    \caption{SGFusion with geographical zones.} 
    \label{fig:sgf-geofl}
\end{figure}

\begin{algorithm}[t]
\caption{Geographic HRG}\label{alg: DenSampling}
\textbf{Input:} Fully connected graph $G$ \\
\text{{\bf Output:} Probabilistic dendrograms}

\begin{algorithmic}[1]
    \STATE \textbf{Dendrogram Sampling and Optimizing Process: }
    \STATE Randomly initialize the dendrogram $\mathcal{T}$ 
    \WHILE{not convergence of $\mathcal{T}$}
    \STATE \textbf{Randomly select} an internal node $r \in \mathcal{T}$
    \STATE \textbf{Uniformly select} either $\alpha$ or $\beta$-transition given $r$ to create the new dendrogram candidate $\mathcal{T}'$ 
    \STATE \textbf{Sample} the transition $\mathcal{T}\rightarrow \mathcal{T}'$ using Eq. \ref{eq:dendrogramupdate}.
    \ENDWHILE
    \STATE\textbf{Construct the Probabilistic Dendograms: } 
    \STATE \textbf{Retrieve} a set of internal ancestor nodes of $z$, denoted as $S_z$, given $\mathcal{T}$
    \STATE \textbf{Initialize} $\mathcal{T}_z$ given $\mathcal{T}$
    \STATE \textbf{Compute} sampling probabilities $\forall r \in S_z: p_r = \frac{\exp(-d_r)}{\sum\limits_{r \in S_z} \exp(-d_r)}$
    \STATE \textbf{Update} the probability value of internal node $r$ in $\mathcal{T}_z$ with $p_r$
    \STATE \textbf{Return:} $\mathcal{T}_z$
\end{algorithmic}
\end{algorithm}

\subsection{Geographic HRG \& Probabilistic Dendrograms}

Figure \ref{fig:sgf-geofl} illustrates the general framework of SGFusion. Given a set of users $u \in z$, each user $u$ collects data $D_{u} = \{(x, y)\}$ in the zone $z$, where $x$ and $y$ are the input and label of a data sample $(x, y)$, respectively. To build the HRG, each user $u$ independently sends their data label distribution\footnote{The distribution can have different forms, such as histograms or class distributions, depending on the downstream learning tasks. Sending a data label histogram to the edge-device can incur a privacy risk, which can be effectively addressed by using the Laplace mechanism to preserve differential privacy \cite{dwork2014}, as in the \ref{DP-preserving Local Data Label Histogram}, without affecting model utility notably.}, denoted as $\mathcal{Y}_u$, aggregated from all labels $\{y\}$ to their corresponding edge-device managing the zone $z$.
The edge device of zone $z$ averages these local data label distributions to create a zone-level data label distribution $\mathcal{Y}_z$, as follows:
\begin{equation}
\forall z \in Z: \mathcal{Y}_z = \frac{1}{m_z}\sum_{u \in z} \mathcal{Y}_u.
\end{equation}

All the edge devices send their zone-level data label distributions $\{\mathcal{Y}_z\}_{z \in Z}$ to the cloud independently. Now, the cloud can construct a fully connected graph $G$ in a centralized manner, in which a node represents a zone and an edge represents the distance, e.g., Euclidean distance \cite{ONEILL200643}, between two zones $z$ and $z'$ using their zone-level data label distributions $d(\mathcal{Y}_z, \mathcal{Y}_{z'})$. Other distance functions such as Manhattan distance \cite{krause1973taxicab} and Minkowski distance \cite{THEODORIDIS2009701} could be used to calculate the distance between two zones $z$ and $z'$.



Given the graph $G$, Alg. \ref{alg: DenSampling} (Lines 1-6) describes how to construct the Geographic HRG and the dendrograms. The cloud randomly samples a hierarchical structure of the zones as a tree dendrogram $\mathcal{T}(\{r, d_r\})$, consisting of $|Z|$ zones as leaf nodes and a set of internal nodes $r$ associated with average distance scores $d_r$ between their left and right sub-trees, $L_r$ and $R_r$, as follows:
\begin{equation}
    \forall r \in \mathcal{T}: d_r = \frac{\sum_{z \in L_r, z' \in R_r} d(\mathcal{Y}_z, \mathcal{Y}_{z'})}{n_{L_r}n_{R_r}},
\end{equation}
where $n_{L_r}$ and $n_{R_r}$ are the numbers of zones in the left and the right sub-trees of the internal node $r$.

\begin{figure}[t]
    \centering
    \includegraphics[width=0.7\linewidth]{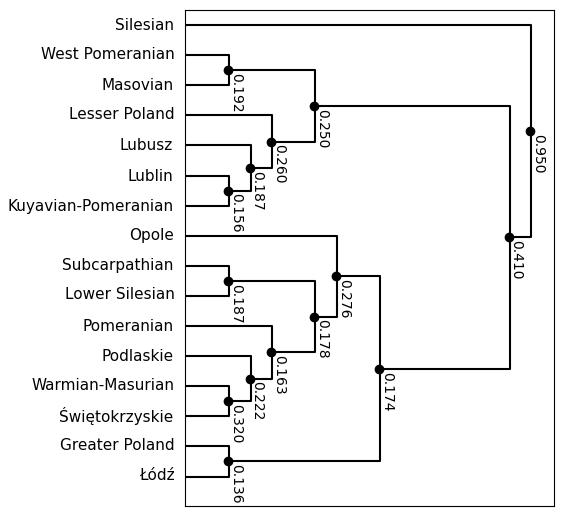}
    \caption{Dendrogram $\mathcal{T}$ with 16 zones in Poland (as shown in Figure \ref{fig:sgf-geofl}) using Euclidean distance.}
    \label{fig:hrg-16z-poland}
\end{figure}

Figure \ref{fig:hrg-16z-poland} illustrates an example of a dendrogram $\mathcal{T}$ with 16 zones as leaf and internal nodes, each of which is associated with a value quantifying the average Euclidean distance between zones from its left and right sub-trees. For example, in the Heart Rate Prediction dataset (HRP) \cite{10.1145/3308558.3313643} that we use in our experiments, $\mathcal{Y}_z$ is the average histogram distribution of the heart rate of all users in the zone $z$. 

To optimize the dendrogram $\mathcal{T}$, SGFusion applies the MCMC sampling to minimize the total average distances in all the internal nodes, indicating that $\mathcal{T}$ can be used to reconstruct the original graph $G$ with minimized loss. In other words, the dendrogram $\mathcal{T}$ is optimized to represent the correlations among zones based on their local data label distributions. The optimization objective of $\mathcal{T}$ is as follows:
\begin{equation}
    \mathcal{T}^* = \arg\min_{\mathcal{T}} \mathcal{L}(\mathcal{T}),
    \label{objectiveHRG}
\end{equation}
where the utility loss of $\mathcal{T}$ is computed by $\mathcal{L}(\mathcal{T}) = \sum\limits_{r\in \mathcal{T}} d_r$.

\begin{figure}[t]            
    \centering
    \includegraphics[scale=0.22]{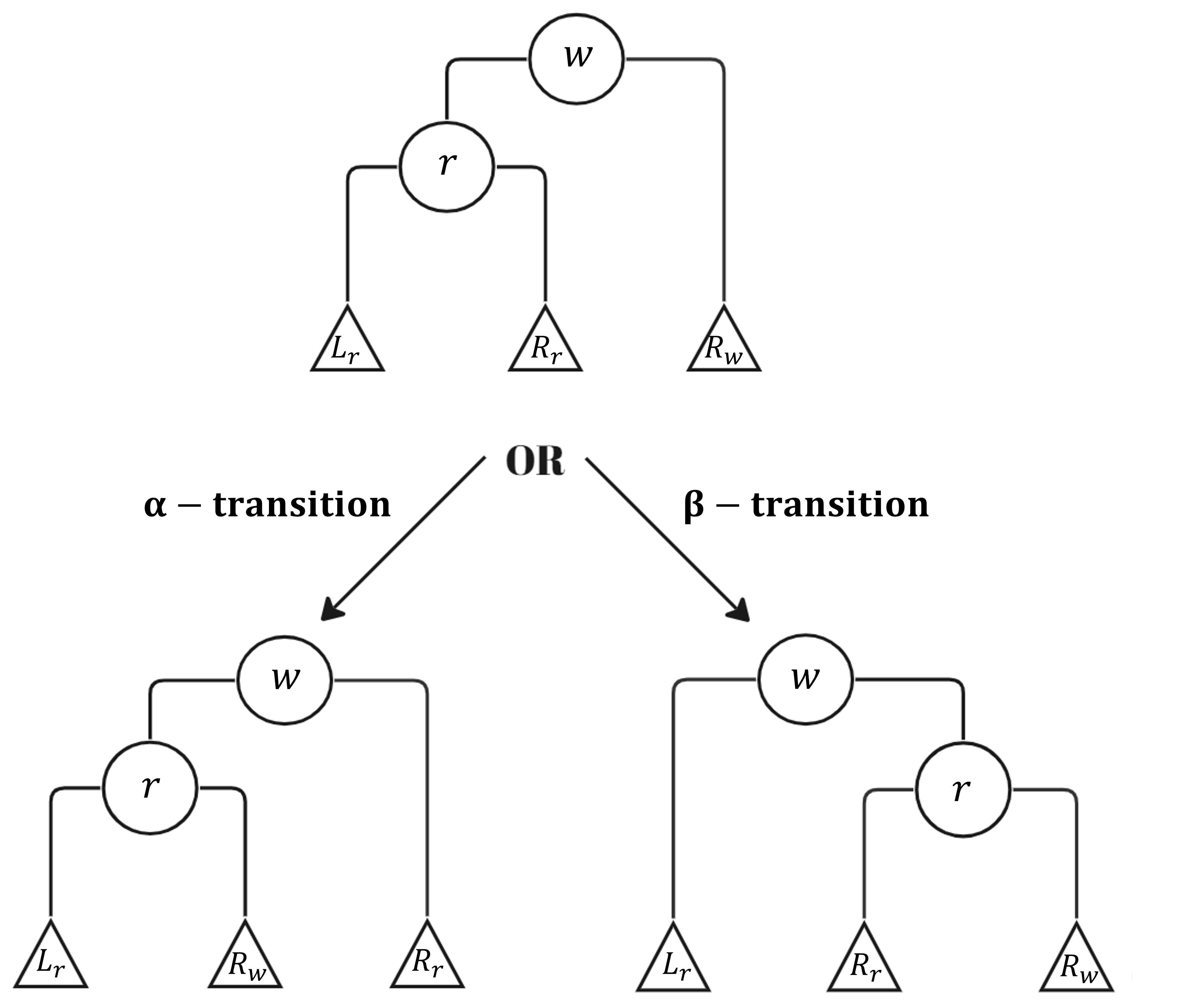}
    \caption{Given a current state of an internal node $r$, there are only two possible candidate states, which are the result of $\alpha$-transition and $\beta$-transition on the node $r$.}
    \label{fig:MCMCSampling}
\end{figure}

At each state, the MCMC sampling picks a random internal node $r$ to re-arrange the structure of its three associated sub-trees, consisting of its children subtrees, i.e., $L_r$ and $R_r$, and a sibling subtree, i.e., $R_w$ (Figure \ref{fig:MCMCSampling}). In the $\alpha$-transition, given an internal node $r$, the order of the three subtrees associated with $r$ is rearranged while keeping the state of $r$. On the other hand, in the $\beta$-transition, the order of the three subtrees does not change, but the state of $r$ is changed compared with $w$. SGFusion uniformly chooses either $\alpha$-transition or $\beta$-transition with an equal probability of 0.5 to create a new dendrogram candidate $\mathcal{T}'$. Then, SGFusion updates $\mathcal{T}$ with a probability $\rho = \min(1, \frac{\exp(\mathcal{L}({\mathcal{T}}))}{\exp (\mathcal{L}({{\mathcal{T'}}}))})$, as follows:


\begin{align}
    \label{eq:dendrogramupdate}
    \mathcal{T} &= \left \{
    \begin{aligned}
        &\mathcal{T'}, &&\text{with probability } \rho  \\
        &\mathcal{T}, &&\text{with probability } 1 - \rho.
    \end{aligned} \right. 
\end{align} \par 


The key idea of updating the dendrogram using Eq. \ref{eq:dendrogramupdate} is that SGFusion always accepts the transition from $\mathcal{T}$ to $\mathcal{T}'$ when it minimizes the utility loss $\mathcal{L}(\mathcal{T})$; otherwise, SGFusion accepts the transition that increases the utility loss $\mathcal{L}(\mathcal{T})$ with the probability of $\frac{\exp(\mathcal{L}({\mathcal{T}}))}{\exp(\mathcal{L}({{\mathcal{T'}}}))}$. SGFusion executes the MCMC sampling process until the convergence of $\mathcal{T}$. 
By doing so, SGFusion optimizes the dendrogram to present the hierarchical structure of all the geographical zones.

\textbf{Probabilistic Dendrogram} (Alg.\ref{alg: DenSampling} Lines 7-12). After building the dendrogram $\mathcal{T}$, the cloud starts constructing a probabilistic dendrogram, denoted as $\mathcal{T}_z$, for each zone $z$, in which the value of an internal node $r$ is the probability $p_r$ of zones in either the left sub-tree or the right sub-tree of $r$ to share gradients with the zone $z$ at a training round.
To construct the probabilistic dendrogram $\mathcal{T}_z$ for a zone $z$, the cloud creates $\mathcal{T}_z$ as a copy of the dendrogram $\mathcal{T}$ with empty values for internal nodes. Then, the cloud computes the probabilities $\{p_r\}$ for ancestors of zone $z$, denoted as $S_z$, by normalizing distance scores $d_r$ of these ancestors:
\begin{equation}
\label{eq:normalize}
\forall r \in S_z: p_r = \exp(-d_r) / \big(\sum\limits_{r \in S_z} \exp(-d_r)\big),
\end{equation}
where $p_r$ is the probability value of an internal node $r$ in $\mathcal{T}_z$ and $\forall z: \sum_{r \in S_z}p_r = 1$. 
For instance, Figure \ref{fig:probaDen} shows the probabilities of zones sharing their local gradients with the zone ``West Pomeranian'' at a training round. Based on the heart rate dataset, zone ``Lublin'' has a probability of $0.293$ to share its local gradient with ``West Pomeranian.''

\begin{figure}[t]
    \centering
    \includegraphics[scale=0.45]{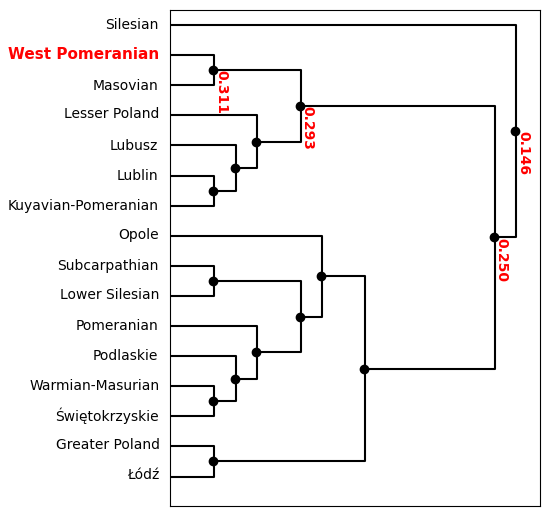}
    \caption{Probabilistic dendrogram of zone ``West Pomeranian'' in Poland derived from Figure \ref{fig:hrg-16z-poland}.}
    \label{fig:probaDen}
\end{figure}

\begin{algorithm}[t]
\caption{SGFusion Training Algorithm}\label{alg: Training}
\hspace*{\algorithmicindent} \textbf{Input: }Zone $z$ and Probabilistic Dendrogram $\mathcal{T}_z$\\
\hspace*{\algorithmicindent} \textbf{Output: }Zone-FL model $\theta_z$
\begin{algorithmic}[1]
    \STATE \textbf{Initialize } the zone's model weight $\theta_z^0$
    \FOR{$t=1,2,\ldots, T$}
    \STATE \textbf{Sample} a set of zones $\mathcal{N}(z,t)$ given $\mathcal{T}_z$
    \STATE $\forall z' \in \mathcal{N}(z,t): e_{z,z'} \leftarrow \sigma\big(\nabla_{\theta^t_z}F_z; \nabla_{\theta^t_z}F_z'\big)$
    \STATE $\forall z' \in \mathcal{N}(z,t): \lambda_{z, z'} \leftarrow \frac{\exp (e_{z,z'})}{\sum_{\tilde{z} \in \mathcal{N}(z,t)} \exp (e_{z,\tilde{z}})}$
    \STATE $\theta_z^{t+1} \leftarrow \theta_z^{t} - \eta_t\big[ \nabla_{\theta^{t}_z}F_z + \sum_{z' \in \mathcal{N}(z, t)}\lambda_{z, z'}\nabla_{\theta^{t}_z}F_{z'}\big]$
    \ENDFOR
    
    \STATE \textbf{Return:} $\theta_z^{T}$
\end{algorithmic}
\end{algorithm}

\subsection{DP-preserving Local Data Label Histogram}

\label{DP-preserving Local Data Label Histogram}

\begin{figure}[t]
    \centering
    \includegraphics[scale=0.25]{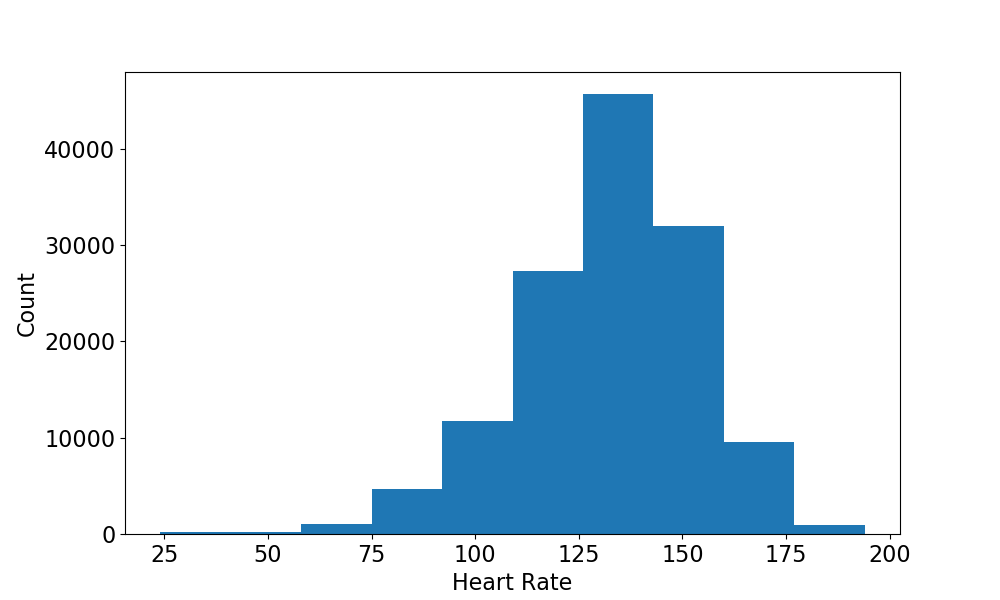}
      \caption{Data label distribution $\mathcal{Y}_u$ of a particular user $u$.}
      \label{fig:DataLabelDistribution}
\end{figure}
Differential privacy (DP) \cite{dwork2014} can be used to release the statistical information of the datasets while protecting individual data privacy. Given a randomized algorithm $\mathcal{M}$, if for any two adjacent input datasets $D$ and $D'$ which differ by only one data point, and for all possible outputs $\mathcal{O}\in Range(\mathcal{M})$, where $Range(\mathcal{M})$ denotes every possible output of $\mathcal{M}$:
\begin{equation}
Pr[\mathcal{M}(D) = \mathcal{O}] \leq e^\epsilon Pr[\mathcal{M}(D') = \mathcal{O}], 
\end{equation}
then, $\mathcal{M}$ satisfies the $\epsilon$-DP. 

The difference in the distribution between $D$ and $D'$ is controlled by the privacy budget $\epsilon$. A smaller $\epsilon$ enforces a stronger privacy guarantee. Due to the risk that labeled data points could leak the user's raw data to the edge devices, SGFusion adapts this mechanism to protect the data label distribution of a user (Figure \ref{fig:DataLabelDistribution}). Given a data label distribution $\mathcal{Y}_u$ of a user $u$, the global sensitivity $\Delta h_u$ of the data label histogram is defined as follows:
\begin{equation}
\Delta h_u = \max_{y} |\mathcal{Y}_u - \mathcal{Y'}_u|,
\end{equation}
where $\mathcal{Y}_u$ and $\mathcal{Y'}_u$ and the data label histograms derived from theQ data $D$ and $D'$, 
which are different by at most one data point's label $y$. Then, we add the Laplace noise $Lap(0,\frac{\Delta h_u}{\epsilon})$ into $\mathcal{Y}_u$ to preserve the DP of the user's data label histogram: 
\begin{equation}
\mathcal{M}(\mathcal{Y}_u, \epsilon) = \mathcal{Y}_u + Lap(0,\frac{\Delta h_u}{\epsilon}).
\end{equation}

Doing so guarantees this mechanism satisfies $\epsilon$-DP following \cite{dwork2014}. We apply this mechanism to independently derive every user's DP-preserving data label histogram. We then use the perturbed histograms to construct the HRG and the probabilistic dendrograms (as in the SGFusion algorithm without privacy protection). Every user $u$ independently sends their DP-preserving data label distribution $\mathcal{M}(\mathcal{Y}_u,\epsilon)$ to their corresponding edge-device managing the zone $z$, and this device aggregates these data label histograms to create a zone-level data label distribution $\mathcal{Y}_z$, as follows:
\begin{equation}
\forall z \in Z: \mathcal{Y}_z = \frac{1}{m_z}\sum_{u\in z}\mathcal{M}(\mathcal{Y}_u,\epsilon).
\end{equation}
Then, all the edge devices send their data label distribution to the cloud independently to construct a fully connected graph $G$, in which a node represents a zone and an edge represents the distance $d(\mathcal{Y}_z, \mathcal{Y}_{z'})$ (e.g., Euclidean distance, Manhattan distance, or Minkowski distance) between two zones $z$ and $z'$. 
Next, with the graph $G$, the cloud constructs the Geographic HRG and the dendrogram based on Alg. \ref{alg: DenSampling} (Lines 1-6). After building the dendrogram, the cloud starts constructing the probabilistic dendrograms (Alg.\ref{alg: DenSampling} Lines 7-12). 

We found that there is a marginal cost to protect the label data. For example, in our experiments, given $\epsilon = 10$, preserving the data label histograms of all users causes 0.15$\%$ difference in Norway's model utility, from 20.04 to 20.07 in the RMSE test loss, while outperforming the other baselines. This cost is low because the aggregated histogram neutralizes the noise added to the users' data label histograms. Therefore, SGFusion still has a good model utility while deriving DP-preserving data label histograms.

\subsection{SGFusion Training}

After constructing all probabilistic dendrograms $\{\mathcal{T}_z\}_{z \in Z}$, the cloud sends them to the corresponding edge servers for training zone models. For brevity, let us describe SGFusion training for a zone $z$, given its associated dendrogram $\mathcal{T}_z$, since the training is done independently for each zone.


At each round $t$, we propose a bottom-up sampling algorithm, as in Figure \ref{fig:NeighboringSampling} and Alg. \ref{alg: Training}, for zone $z$ (a leaf node of the probabilistic dendrogram $\mathcal{T}_z$) to sample a set of zones used for its gradient fusion. From the leaf node represented by zone $z$, our algorithm will travel to every (internal) ancestor node $r$ of $z$, from the closest ancestor node
to the root node, and sample zones in $r$'s sub-trees with the probability $p_r$ associated with the node $r$. Each zone is only sampled once in this process. As a result, zone $z$ identifies a set of zones $\mathcal{N}(z, t)$ at training round $t$ to fuse its local gradients with these sampled zones' local gradients in a self-attention mechanism and update the zone model $\theta_z$, as follows:
\begin{equation}
    \theta_z^{t+1} \leftarrow \theta_z^{t} - \eta_t\big[ \nabla_{\theta^{t}_z}F_z + \sum_{z' \in \mathcal{N}(z,t)
    }\lambda_{z,z'}\nabla_{\theta^{t}_z}F_{z'}\big].
    \label{Aggregating}
\end{equation}
SGFusion trains all zone-FL models $\{\theta_z\}$ using Eq. \ref{Aggregating} independently until the models converge after $T$ training rounds.
\vspace{-10pt}
\section{Theoretical Guarantees}
\subsection{Convergence Analysis}

In this section, we analyze the convergence rate of SGFusion. First, it is worth noting that the HRG is only sampled once before the training process of SGFusion, which is a preprocessing step. Since SGFusion computes the distance between two zones based on Euclidean metric, the complete graph across the zones is an undirected graph, which ensures the HRG sampling process is reversible and ergodic (i.e., any pairs of Dendograms can be transformed to each other with finite sampling steps). Therefore, the HRG sampling process has a unique stationary distribution after it converges to equilibrium \cite{clauset2006structural}, which is reached in our experiments.

\begin{figure}[t]
      \centering
    \includegraphics[scale=0.22]{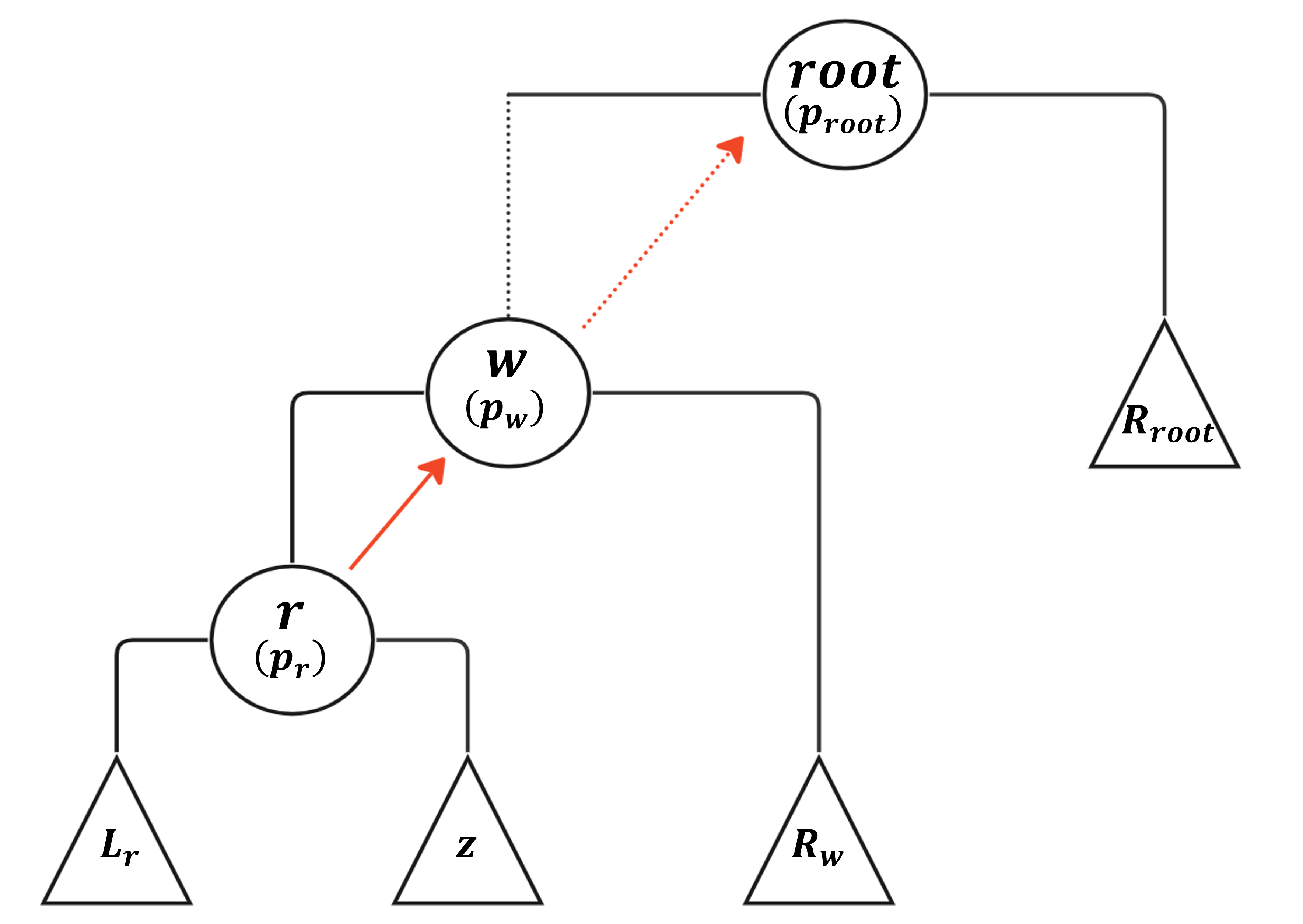}
      \caption{Bottom-up zone sampling algorithm.}
      \label{fig:NeighboringSampling}
\end{figure}

Given a converged HRG representing the relation across the zones, we analyze SGFusion's convergence rate when optimizing a strongly convex and Lipschtiz continuous loss function $F_z, \forall z \in Z$, to provide guidelines for practitioners to employ SGFusion in real-world applications. The key result is that for a particular zone $z$, with a careful learning rate decaying process, SGFusion converges to the global minima of zone $z$ with the rate of $\mathcal{O}(\log(T)/T)$, where $T$ is the number of updating steps. Furthermore, our analysis highlights the impact of the non-IID data property among the zones on the convergence of SGFusion and how SGFusion remedies this impact to enhance the model's utility. To do so, firstly, we consider the following assumptions:

\begin{assumption}\label{assmpt:strong-convex}
    $F_z, \forall z \in Z$ is $\mu$-strongly convex, we have $F_z(\theta') \geq F_z(\theta) + (\theta' - \theta)^\top\nabla_{\theta '} F_z(\theta') + \frac{\mu}{2}\| \theta' - \theta\|^2_2,  \forall \theta ', \theta$.
\end{assumption}
\begin{assumption}
    \label{assmpt:lipschitz}
    $F_z, \forall z \in Z$ is $G$-Lipschitz, such that $\|\nabla_{\theta}F_z(\theta)\|_2 \le G$.
\end{assumption}
\begin{assumption}
    \label{assmpt:bddeviation}
    There exists a constant $\tau$ such that $\|\theta^*_z - \theta^*_{z'}\|_2 \le \tau, \forall z, z' \in Z$, where $\theta^*_z$ is the parameter at the global minimum for zone $z$.
\end{assumption}

These assumptions are typical in providing convergence analysis for FL algorithms in the previous works \cite{li2020convergence,xing2021federated,wu2023faster}. Moreover, they are practically common for many ML models, such as linear regression, logistic regression, and simple neural networks \cite{pilanci2020neural}. Given these assumptions, we can establish the convergence rate of $\theta_z$ through $T$ updating steps, learned by SGFusion, as follows:

\setcounter{theorem}{0}
\begin{theorem}
    \label{theo:conv_theo}
    Let $\theta^T_z$ be the output of Alg. \ref{alg: Training}. If learning rate $\eta_t = \frac{1}{\mu t}$ and Assumption \ref{assmpt:strong-convex} - \ref{assmpt:bddeviation} are satisfied, then the excessive risk $\mathbb{E}[F_z(\theta^T_z)] - F_z(\theta^*_z)$ is bounded by:
    {
    \small
    \begin{multline}
        \mathbb{E}\Big[(F_z(\theta_z^T)\Big] - F_z(\theta^*_z) \le \frac{10G^2}{\mu T} + \frac{16G^2}{\mu T}(1 + \log(\frac{T}{2}))  \\
        + \frac{\bar{G}}{2\mu}\frac{1+\log(T)}{T} + \frac{3G\tau}{2}\sum_{k\neq i}p_{z,z'}, \label{eq:conv_rate}
        \end{multline}
    }where $\bar{G} = G^2\big[1 + 2\sum_{z' \neq z}p_{z,z'} + \sum_{z' \neq z} p_{z',z}(1-p_{z,z'}) + \big(\sum_{z' \neq z}p_{z,z'}\big)^2\big]$, $p_{z,z'}$ is the probability to sample zone $z'$ for the updating process of zone $z$ given the dendrogram $\mathcal{T}$, and the expectation is over the randomness of SGFusion.
\end{theorem}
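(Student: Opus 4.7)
The plan is to mirror the classical biased stochastic gradient descent (SGD) analysis for $\mu$-strongly convex, $G$-Lipschitz (bounded-gradient) objectives, adapted to the cross-zone fusion update in Eq.~\ref{Aggregating}. Write the effective search direction as $g^t_z := \nabla_{\theta^t_z}F_z + \sum_{z' \in \mathcal{N}(z,t)} \lambda_{z,z'}\nabla_{\theta^t_z}F_{z'}$ so that $\theta_z^{t+1} = \theta_z^t - \eta_t g^t_z$, and take all expectations over the randomness of the bottom-up dendrogram sampling (which induces the zone inclusion probabilities $p_{z,z'}$) and of the softmax attention weights $\lambda_{z,z'}$. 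The starting identity is
\[
\|\theta_z^{t+1} - \theta^*_z\|_2^2 = \|\theta_z^t - \theta^*_z\|_2^2 - 2\eta_t \langle g^t_z, \theta_z^t - \theta^*_z\rangle + \eta_t^2 \|g^t_z\|_2^2,
\]
and the goal is to bound both right-hand terms in expectation, assemble a one-step contraction, and translate it into a bound on $\mathbb{E}[F_z(\theta_z^T)] - F_z(\theta_z^*)$.

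For $\mathbb{E}\|g^t_z\|_2^2$, I expand the square, apply $\|\nabla F_{z'}\|_2 \le G$ from Assumption~\ref{assmpt:lipschitz}, and use that the softmax normalization forces $\sum_{z' \in \mathcal{N}(z,t)}\lambda_{z,z'} = 1$. The three pieces then yield (i) the deterministic constant $1$ from $\|\nabla F_z\|^2$, (ii) a cross term whose expectation over $\mathcal{N}(z,t)$ scales linearly in $\sum_{z'\ne z}p_{z,z'}$, (iii) a second-moment term decomposing into the squared-mean part $(\sum_{z'\ne z}p_{z,z'})^2$ and the indicator-variance part $\sum_{z'\ne z}p_{z',z}(1-p_{z,z'})$. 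These four contributions combine exactly into the $\bar G$ in the statement.

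For the first-order term I split $g^t_z$ into the own-zone and fused parts. Strong convexity of $F_z$ (Assumption~\ref{assmpt:strong-convex}) gives the standard lower bound $\langle \nabla F_z,\theta_z^t-\theta^*_z\rangle \ge F_z(\theta_z^t)-F_z(\theta^*_z) + \tfrac{\mu}{2}\|\theta_z^t-\theta^*_z\|_2^2$. For each fused zone $z'$ I rewrite $\theta_z^t-\theta^*_z = (\theta_z^t-\theta^*_{z'}) + (\theta^*_{z'}-\theta^*_z)$; convexity of $F_{z'}$ at its own optimum makes $\langle \nabla F_{z'}(\theta_z^t),\theta_z^t-\theta^*_{z'}\rangle \ge 0$, while Cauchy--Schwarz with Assumptions~\ref{assmpt:lipschitz}--\ref{assmpt:bddeviation} controls the residual by $G\tau$. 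The main obstacle is precisely this bias: $\lambda_{z,z'}$ itself depends on the realized gradient inner product through a softmax, so I cannot claim $\mathbb{E}[\lambda_{z,z'}] = p_{z,z'}$; I therefore upper bound $\lambda_{z,z'}\le 1$ before pulling the expectation over the dendrogram sampling, turning the indicator of inclusion into $p_{z,z'}$ and producing the irreducible bias $\tfrac{3G\tau}{2}\sum_{z'\neq z} p_{z,z'}$ appearing in Eq.~\ref{eq:conv_rate}.

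Combining the two bounds yields the one-step inequality
\[
\mathbb{E}\|\theta_z^{t+1}-\theta^*_z\|_2^2 \le (1-\mu\eta_t)\mathbb{E}\|\theta_z^t-\theta^*_z\|_2^2 - 2\eta_t\mathbb{E}[F_z(\theta_z^t)-F_z(\theta^*_z)] + \eta_t^2\bar G + 2\eta_t G\tau \sum_{z'\neq z} p_{z,z'}.
\]
With the step size $\eta_t = 1/(\mu t)$, the factor $1-\mu\eta_t = (t-1)/t$ lets multiplication by $t$ telescope the distance terms, and a Shamir--Zhang style weighted average over the last $T/2$ iterates converts the remaining stochastic terms into the leading $\tfrac{10G^2}{\mu T} + \tfrac{16G^2}{\mu T}(1+\log(T/2)) + \tfrac{\bar G}{2\mu}\tfrac{1+\log T}{T}$ contributions, while the non-vanishing bias survives as the final $\tfrac{3G\tau}{2}\sum_{z'\neq z} p_{z,z'}$ term, reproducing Eq.~\ref{eq:conv_rate} and quantifying how cross-zone heterogeneity (through $\tau$) caps the attainable utility.
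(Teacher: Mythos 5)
Your proposal matches the paper's own proof in all essential respects: the same Bernoulli-indicator decomposition of the fused update, the same second-moment expansion yielding $\bar{G}$ via mean plus variance of $\sum_{z'\neq z}a_{z,z'}$, the same use of $\lambda_{z,z'}\le 1$ before taking expectation to produce the irreducible $\frac{3G\tau}{2}\sum p_{z,z'}$ bias, and the same appeal to the Shamir--Zhang last-iterate analysis with $\eta_t = 1/(\mu t)$ for the $\mathcal{O}(\log T/T)$ terms. The only cosmetic difference is that you lower-bound the cross-zone inner product via first-order optimality of $F_{z'}$ at $\theta_{z'}^*$ plus Cauchy--Schwarz, whereas the paper goes through the function-value gap $F_{z'}(\theta_z^t)-F_{z'}(\theta)\ge -G\|\theta_{z'}^*-\theta\|_2$; both give the identical $-G\tau$ bound.
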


\begin{proof}
By the updating process of \textsc{SGFusion}, we have that:
{
\small
\begin{align}
    \theta_z^{t+1} = \theta_z^t - \eta_t\Big[\nabla_{\theta_z^t}F_z + \sum_{z' \neq z} a_{z,z'}\lambda_{z,z'}\nabla_{\theta_z^t}F_{z'}\Big] \nonumber,
\end{align}
}where $a_{z,z'} \sim Bernoulli(p_{z,z'})$ 
with $p_{z,z'}$ is the probability to sample zone $z'$ for the updating process of zone $z$. Let us denote $g_z^t = \nabla_{\theta_z^t}F_z + \sum_{z' \neq z} a_{z,z'}\lambda_{z,z'}\nabla_{\theta_z^t}F_{z'}$, we can expand:
{
\small
\begin{align}
    \|\theta_z^{t+1} &- \theta\|_2^2 = \|\theta_z^t - \eta_t g_z^t - \theta\|_2^2 \nonumber \\
    &= \|\theta_z^{t} - \theta\|_2^2 -2\eta_t\langle g_z^t, \theta_z^t - \theta\rangle + \eta^2_t\|g_z^t\|^2_2 \nonumber.
\end{align}
}By re-arranging the equations, we have: 
{
\small
\begin{align}
    \langle g_z^t, \theta_z^t - \theta\rangle &= \frac{1}{2\eta_t}(\|\theta_z^{t} - \theta\|_2^2 - \|\theta_z^{t+1} - \theta\|_2^2) + \frac{\eta_t}{2}\|g_z^t\|^2_2 \nonumber \\
    &= \frac{1}{2\eta_t}A_1 + \frac{\eta_t}{2}A_2.
    \label{eq:derived-1}
\end{align}
}We can bound the expectation of $A_2$ as follows:
{\small
\begin{align}
    &A_2 = \Big\|\nabla_{\theta_z^t}F_z + \sum_{z' \neq z} a_{z,z'}\lambda_{z,z'}\nabla_{\theta_z^t}F_{z'}\Big\|_2^2 \nonumber\\
    &\le \Big[\|\nabla_{\theta_z^t}F_z\|_2 + \sum_{z' \neq z} a_{z,z'}\lambda_{z,z'}\|\nabla_{\theta_z^t}F_{z'}\|_2\Big]^2 \\
    &\le G^2\Big[1 + \sum_{z' \neq z} a_{z,z'}\Big]^2 \nonumber,
\end{align}
}where the second inequality is due to the Assumption \ref{assmpt:lipschitz}, and $\lambda_{z,z'} \le 1, \forall z, z'$. Therefore, in expectation, we have:
{
\footnotesize
\begin{align}
    &\mathbb{E}(A_2) \le G^2\mathbb{E}\Big[1 + \sum_{z' \neq z} a_{z,z'}\Big]^2 \nonumber\\
    &= G^2\Bigg[1 + 2\mathbb{E}\Big(\sum_{z' \neq z} a_{z,z'}\Big) + \mathbb{E}\Big[\Big(\sum_{z' \neq z} a_{z,z'}\Big)^2\Big]\Bigg] \nonumber\\
    &= G^2\Bigg[1 + 2\mathbb{E}\Big(\sum_{z' \neq z} a_{z,z'}\Big) + Var\Big(\sum_{z' \neq z} a_{z,z'}\Big) + \mathbb{E}\Big(\sum_{z' \neq z} a_{z,z'}\Big)^2\Bigg] \nonumber\\
    &= G^2\Bigg[1 + 2\sum_{z' \neq z}p_{z,z'} + \sum_{z' \neq z} p_{z,z'}(1-p_{z,z'}) + \Big(\sum_{z' \neq z'}p_{z,z'}\Big)^2\Bigg] = \bar{G} \nonumber,
\end{align}
}where $p_{z,z'}, \forall z,z'$ are fixed given a HRG. Furthermore, we can observe that:
\begin{align}
    \langle g_z^t, \theta_z^t - \theta\rangle &= \langle \nabla_{\theta_z^t}F_z, \theta_z^t - \theta\rangle \nonumber \\ &+ \sum_{z' \neq z} a_{z, z'}\lambda_{z,z'}\langle \nabla_{\theta_z^t}F_{z'}, \theta_z^t - \theta\rangle \nonumber.
\end{align}By the convexity of $F_z, \forall z$, we can have that:
{
\small
\begin{align}
    &\langle \nabla_{\theta_z^t}F_z, \theta_z^t - \theta\rangle \ge F_z(\theta_z^t) - F_z(\theta) \nonumber\\
    &\langle \nabla_{\theta_z^t}F_{z'}, \theta_z^t - \theta\rangle \ge F_{z'}(\theta_z^t) - F_{z'}(\theta) \nonumber\\
    &\qquad\qquad\qquad= F_{z'}(\theta_z^t) + F_{z'}(\theta_{z'}^*) - F_{z'}(\theta_{z'}^*) - F_{z'}(\theta) \nonumber\\
    &\qquad\qquad\qquad\ge -|F_{z'}(\theta_{z'}^*) - F_{z'}(\theta)| \ge -G\|\theta_{z'}^* - \theta\|_2 \nonumber.
\end{align}
}Therefore, putting to Eq. \eqref{eq:derived-1}, it follows that:
{
\small
\begin{align}
    F_z(\theta_z^t) - F_z(\theta) &\le \frac{1}{2\eta_t}A_1 + \frac{\eta_t}{2}A_2 \nonumber \\
    &+ G\sum_{z'\neq z}a_{z,z'}\lambda_{z,z'}\|\theta_{z'}^* - \theta\|_2 \nonumber.
\end{align}
}Let $v$ be an arbitrary element in $\{1, \dots, \lfloor T/2\rfloor\}$. Summing over all over $t= T-v, \dots, T$, setting $\eta_t = \frac{1}{\mu t}$, and taking expectation on both sides, we have:
{\small
\begin{align}
    &\mathbb{E}\Big[\sum_{t=T-v}^T(F_z(\theta_z^t) - F_z(\theta))\Big] \le \frac{\mu(T-v)}{2}\mathbb{E}(\|\theta_z^{T-v} - \theta\|_2^2) \nonumber\\
    &+ \frac{\mu}{2}\sum_{t=T-v+1}^{T}\mathbb{E}(\|\theta_z^{t} - \theta\|_2^2) + \frac{\bar{G}}{2\mu}\sum_{t=T-v}^{T}\frac{1}{t} \nonumber \\
    &+ G(T-u+1)\mathbb{E}\Big[\sum_{k\neq i}a_{z,z'}\lambda_{z,z'}\|\theta_{z'}^* - \theta\|_2\Big] \nonumber.
\end{align}
}By the analysis from Theorem 1 from Shamir et al. \cite{shamir2013stochastic} for strongly convex function and replace $\theta$ by $\theta_z^*$, we have:
{\small
\begin{align}
    &\mathbb{E}\Big[(F_z(\theta_z^T)\Big] - F_i(\theta^*_z)) \le \frac{10G^2}{\mu T} + \frac{16G^2}{\mu T}(1 + \log(\frac{T}{2}))  \nonumber \\
    &+ \frac{\bar{G}}{2\mu}\frac{1+\log(T)}{T} + \frac{3G}{2}\mathbb{E}\Big[\sum_{k\neq i}a_{z,z'}\lambda{z,z'}\|\theta_{z'}^* - \theta_z^*\|_2\Big] \nonumber.
\end{align}
}By the Assumption \ref{assmpt:bddeviation}, we have:
{\small
\begin{align}
    &\mathbb{E}\Big[(F_z(\theta_z^T)\Big] - F_z(\theta^*_z)) \le \frac{10G^2}{\mu T} + \frac{16G^2}{\mu T}(1 + \log(\frac{T}{2})) \nonumber \\
    &+ \frac{\bar{G}}{2\mu}\frac{1+\log(T)}{T} + \frac{3G\tau}{2}\sum_{k\neq i}p_{z,z'} \nonumber,
\end{align}
}which concludes the proof.    
\end{proof}

As $T \rightarrow \infty$, we can induce from Eq. \eqref{eq:conv_rate} that SGFusion converges to the global minima of each zone $z \in Z$ with the rate of $\mathcal{O}(\log(T)/T)$. From the last term of Eq. \eqref{eq:conv_rate}, we see that even if $T \rightarrow \infty$, the performance of SGFusion is limited by the non-IID property among the zones quantified by $\tau$, since $\tau \rightarrow \infty$ will enlarge the expected excessive risk. This impact of $\tau$ is consistent with the theoretical and empirical results of previous works \cite{li2020convergence}.
However, focusing on the last term of Eq. \eqref{eq:conv_rate}, it also highlights how SGFusion remedies the non-IID problem from the theoretical point of view. Specifically, as $\tau \rightarrow \infty$, which means we have more diverse data distributions among different zones, the $p_{z, z'}, \forall z,z' \in Z$ will decrease due to the normalization in Line 10, Alg. \ref{alg: DenSampling}. Hence, SGFusion decreases last term's value in a heavily non-IID setting, resulting in a better model utility for each zone.


\subsection{Complexity Analysis}
 
This section analyzes the complexity of SGFusion in HRG sampling and SGFusion's training processes. For the HRG sampling, given $Z$ zones, there are $|Z|(|Z|-1)/2$ pairs of zones for the fully connected graph $G$. Thus, the computation complexity to construct the HRG is $O(|Z|^2)$ because we need to compute the distance of each pairs. Then, given $|n_{L_r}|$ and $|n_{R_r}|$ are the numbers of zones in the left and the right subtrees of the dendrogram $\mathcal{T}$, it takes $O(n_{L_r}*n_{R_r})$ to compute the utility loss $\mathcal{L}(\mathcal{T})$ for one MCMC step. By applying the Cauchy-Schwarz inequality, we have that:

\begin{equation}
 n_{L_r}n_{R_r} \leq \frac{(n_{L_r}+n_{R_r})^2}{4} \leq \frac{|Z|^2}{4}.
\end{equation}

Therefore, it requires $O(M|Z|^2)$ in the worst case for the dendrogram $\mathcal{T}$ to converge, where $M$ is the number of MCMC steps until the convergence of $\mathcal{T}$. Then, to construct the probabilistic (binary) dendrograms, it takes $O(\log |Z|)$ to traverse through the depth of the dendrogram. Hence, it takes $O(|Z|\log |Z|)$ to get the $p_r$ for all the zones. Therefore, the computation complexity of the HRG sampling process scales by $O(|Z|^2+ M|Z|^2 + |Z|\log|Z|) \approx O(M|Z|^2)$. However, this process is only executed once before the SGFusion training. Furthermore, the number of zones $|Z|$ is generally small (less than $40$), resulting in low computation cost.

Regarding to the SGFusion training process, we can compute the gradient update for each zone in parallel. Therefore, we consider the computational complexity of a single zone $z$ as follows. In a training round $t$, the computational complexity to compute the gradient update for the zone's model $\theta_z$ is $O(\mathcal{N}(z,t)) \approx O(|Z|)$, where $\mathcal{N}(z,t)$ is the number of neighboring zones which share the gradients with the zone $z$. Furthermore, the training process is executed with $T$ updating steps. Thus, the computational complexity training process is scaled by $O(T|Z|)$, which is linear to the number of zones $|Z|$. As a result, SGFusion remains computationally scalable in the real-world scenario.

\section{Experiments and Evaluation}
\label{Experiments and Evaluation}

We conduct extensive experiments using a real-world dataset collected across six countries to evaluate the performance of SGFusion in comparison with state-of-the-art baselines, focusing on the following aspects: \textbf{(1)} Assessing zone-FL model utility enhanced by SGFusion; \textbf{(2)} Evaluating system scalability of SGFusion through its convergence rate; 
and \textbf{(3)} Understanding the contribution of each component in SGFusion to the overall utility. 


\begin{table}[t] 
\centering 
\caption{Breakdown of the HRP dataset for top six countries: Norway, Spain, US, Thailand, France, and Poland.}
\label{tb:dataBreakdown}   
\resizebox{.49\textwidth}{!}{
\begin{tabular}{ccccc}
\toprule
& \# users & \# samples & \# zones & avg \# samples/zone\\ 
\midrule
Norway & 48 & 5,902 & 13 & 454.00 \\ 
Spain & 110 & 9,609 & 13 & 739.15 \\ 
US & 99 & 14,774 & 32 & 461.69 \\ 
Thailand & 105 & 10,970 & 23 & 476.96 \\ 
France & 67 & 8,094 & 18 & 449.67 \\ 
Poland & 205 & 16,907 & 16 & 461.69 \\ 
\bottomrule
\end{tabular}}
\end{table}


\textbf{Datasets.} We use the heart rate prediction (HRP) dataset \cite{10.1145/3308558.3313643}, consisting of approximately $168,000$ workout records of $956$ users collected from $33$ countries to evaluate SGFusion. Similar to \cite{jiang2023zone}, we select the top $6$ countries with more than $10$ zones having good numbers of data samples, i.e., more than $450$ data samples per zone on average, in our experiment. HRP is an outstanding dataset for our study, and it has sufficient users and geographical information across multiple countries. The availability of such datasets is limited in the real world.


\textbf{Models and Metrics.} We leverage a Long Short-Term Memory (LSTM) model  \cite{10.1145/3308558.3313643} to forecast the heart rate from the input features, such as workout altitude, distance, and elapsed time (or speed). We use the Root Mean Square Error (RMSE) as the main metric to evaluate the model utility. The lower the value of RMSE, the better the model utility. 

\textbf{Established Baselines.} We consider a variety of baselines: \textbf{(1)} The classical \textbf{FedAvg} \cite{mcmahan2017communication}; \textbf{(2)} Geographical FL approaches, including Static Geographical FL (\textbf{SGeoFL}) and Deterministic Zone Gradient Diffusion (\textbf{D-ZGD}) \cite{jiang2023zone}; \textbf{(3)} \textbf{IFCA}, an iterative federated clustering algorithm \cite{NEURIPS2020_ifca}; \textbf{(4)} A multi-center FL approach Stochastic Expectation Maximization FL (\textbf{FedSEM}) \cite{long2023multi}; and \textbf{(5)} A sampling FL mechanism \textbf{FedDELTA}  \cite{wang2023delta} and its variant \textbf{DELTA-Z} proposed to adapt DELTA to geographical zones. FedAvg is the traditional FL setting, where all users jointly train a global FL model. In SGeoFL, users are geographically separated into zones, and every zone trains its own zone-FL model independently without gradient fusing with other zones. D-ZGD is the state-of-the-art training algorithm with self-attention following Eq. \ref{ZGD Formula} given geographical zones. IFCA is a clustering-based FL in which cluster identities of user and model parameters are optimized via a gradient descent process. We apply IFCA on top of each country, where users are distributed and partitioned into clusters without considering any geographical location. FedSEM utilizes an expectation-maximization framework to optimize the client clusters during the FL process. We also adapt FedSEM to the countries' models. 
DELTA is an FL sampling mechanism in which users are selected at each training round to reduce the diversity of their local gradients and variance, improving the learning process. FedDELTA is an FL approach that utilizes a state-of-the-art sampling mechanism, DELTA, to boost the model performance. Also, since DELTA is not originally designed for geographical zones, we adapt it to DELTA-Z by letting DELTA treat the zone $z$'s model as a central model jointly trained by all users from sampled zones and the zone $z$. For a fair comparison, we assign a similar number of zones with D-ZGD for the zone sampling process of DELTA-Z.  

\begin{table}[t] 
\centering 
\caption{SGFusion vs. D-ZGD and DELTA-Z regarding the number of zones with higher model utility.}
\label{tb:SGFusionVsD-ZGD and DELTA-Z} 
\resizebox{0.49\textwidth}{!}{
\begin{tabular}{cccc}
    \toprule
    & D-ZGD & SGFusion & (\%) Gain\\ 
    \midrule
    Norway & 5 & 8 & 60.00\% \\
    Spain & 5 & 8 & 60.00\% \\
    US & 10 & 22 & 120.00\% \\
    Thailand & 8 & 15 & 87.50\% \\
    France & 6 & 12 & 100\% \\ 
    Poland & 4 & 12 & 200\% \\ 
    \bottomrule
\end{tabular} \hfill
\begin{tabular}{ccc}
\toprule
DELTA-Z & SGFusion & (\%) Gain\\ 
\midrule
4 & 9 & 125.00\% \\ 
4 & 9 & 125.00\% \\ 
8 & 24 & 200.00\% \\ 
11 & 12 & 9.09\% \\ 
7 & 11 & 57.14\% \\ 
4 & 12 & 200.00\% \\
\bottomrule
\end{tabular}} 
\end{table}

\textbf{Variants of SGFusion.} We consider a variant of SGFusion, called $\chi$-\textbf{SGFusion}, in which every zone $z \in Z$ samples the same number of zones $\chi$ in their gradient fusion across training rounds. Also, we include \textbf{top-$k$-SGFusion}, in which every zone $z$ uses top-$k$ most similar zones $\{z'\}$, i.e., smallest distance $d(\mathcal{Y}_z, \mathcal{Y}_{z'})$, in its gradient fusion across training rounds. The goals of considering these variants of SGFusion are: \textbf{(1)} For a fair comparison with D-ZGD, we set $\chi$ for every zone $z$ to be the same with the number of neighboring zones of the zone $z$ used in D-ZGD; and \textbf{(2)} Evaluating the effect of stochastic gradient fusion compared with deterministic gradient fusion using either a fixed number of sampled zones or top-$k$ most similar zones with different values of $k$.

\begin{table}[!t] 
\centering 
\caption{$\chi$-SGFusion vs. D-ZGD and DELTA-Z regarding the number of zones with higher model utility.}
\label{tb:chiSGDFusionVsD-ZGD and DELTA-Z}
\resizebox{0.49\textwidth}{!}{
\begin{tabular}{cccc}
\toprule
& D-ZGD & $\chi$-SGFusion & (\%) Gain \\
\midrule
Norway & 6 & 7 & 16.67\% \\
Spain & 5 & 8 & 60.00\% \\
US & 11 & 21 & 90.91\% \\
Thailand & 10 & 13 & 30.00\% \\
France & 4 & 14 & 250.00\% \\
Poland & 7 & 9 & 28.57\% \\
\bottomrule
\end{tabular} \hfill
\begin{tabular}{ccc}
\toprule
DELTA-Z  & $\chi$-SGFusion & (\%) Gain \\ 
\midrule
4 & 9 & 125.00\% \\
3 & 10 & 233.33\% \\
7 & 11 & 57.14\% \\
10 & 13 & 30.00\% \\
7 & 11 & 57.14\% \\
8 & 8 & 0.00\% \\ 
\bottomrule
\end{tabular}}
\end{table}

\begin{figure}[t]
      \centering
    \includegraphics[scale=0.34]{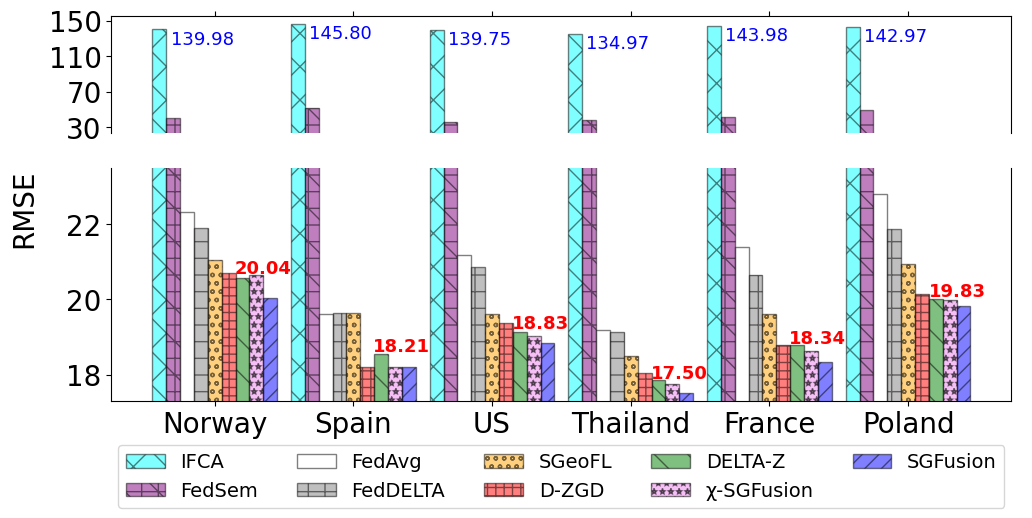}
      \caption{Model utility across countries (smaller, the better).}
      \label{fig:ModelUtility}
      \vspace{-15pt}
\end{figure}

\subsection{Utility and Scalability}

\textbf{Zone and Country Level Utility.} Among all the baselines, D-ZGD and DELTA-Z achieve the best performance (Figure \ref{fig:ModelUtility}). 
Tables ~\ref{tb:SGFusionVsD-ZGD and DELTA-Z} and~\ref{tb:chiSGDFusionVsD-ZGD and DELTA-Z} present the utility of SGFusion compared with these best baselines at the zone-level FL model. SGFusion and $\chi$-SGFusion achieve significantly better model utility on most zone-FL models than D-ZGD and DELTA-Z across all countries. For instance, in Poland, SGFusion achieves better model utility in 12 zones compared with 4 zones of D-ZGD, registering a 200\% improvement. Similar results are observed from other countries and with other baselines, indicating the sharp enhancement of the model utility by the SGFusion at the level of zone-FL models. Across 115 zones in six countries, more than double the number of zones benefit from SGFusion, i.e., 77 zones, than the best baselines, i.e., 38 zones.

Although the improvement at the country level (Figure \ref{fig:ModelUtility}) is not as clear as the zone level since each country uses an aggregated model from its zone models, the model utility at the country level can strengthen our observation. SGFusion improves the aggregated model utility across six countries by $3.23\%$ on average compared with existing approaches. Note that $\chi$-SGFusion, which uses the same numbers of sampled zones with D-ZGD, also outperforms D-ZGD across six countries and outperforms DELTA-Z in five countries with a comparable result in Norway.
\vspace{5pt}






\textbf{Convergence Speed.} As shown in Figure \ref{fig:Learning curve}, SGFusion and $\chi$-SGFusion have a similar convergence speed compared with D-ZGD. The key reason for this result is that SGFusion utilizes relatively small numbers of sampled zones to train a specific zone-FL model on average compared with D-ZGD and DELTA-Z (Figure \ref{fig:Average number of sampled zones}). In fact, at a training round, the average numbers of sampled zones in SGFusion are smaller than the ones in D-ZGD and DELTA-Z across 4 over 6 countries and are comparable in the remaining countries. 
To shed light on why using smaller sampled zones for gradient fusion in SGFusion enables us to avoid negative impacts on convergence speed and system scalability, while still resulting in better model utility, we conduct a \textbf{homophily data analysis} on these sampled zones. This experiment studies the average homophily \cite{khanam2023homophily} across the zones $z \in Z$ and across $T$ updating steps, quantified as: 
{\small
\begin{align}
\frac{1}{T}\sum_{t=1}^T\Big[\frac{1}{|Z|} \sum_{z \in Z} \Big(\frac{1}{|\mathcal{N}(z,t)|}\sum_{z' \in \mathcal{N}(z, t)} d(\mathcal{Y}_z, \mathcal{Y}_{z'})\Big)\Big],
\end{align}
}

\begin{figure}[t]
    \centering
    \subfloat[Learning curve.]{\includegraphics[width=0.45\columnwidth]{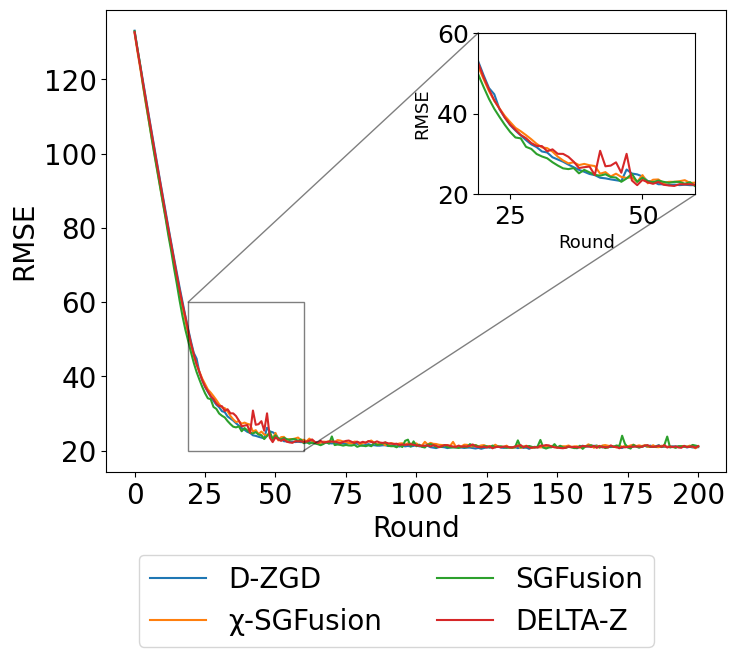}\label{fig:Learning curve}}
    \subfloat[Average number of sampled zones.]
    {\includegraphics[width=0.56\columnwidth]{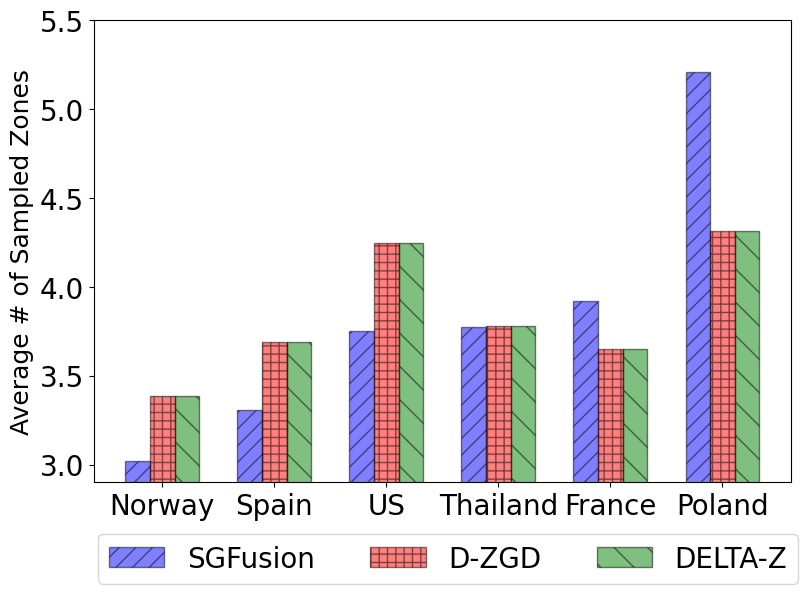}\label{fig:Average number of sampled zones}}
    \caption{Learning curves of the country-level FL models using data collected from Norway (best view in color).}
    \label{fig:LearningCurveandAvergeofSampledZones}
    \vspace{-15pt}
\end{figure}
 
\noindent where $d(\cdot, \cdot)$ is the metric measuring the data label distributions between zones $z$ and $z'$, e.g., Euclidean distance \cite{ONEILL200643}. Intuitively, the lower average homophily, the more similar the label distribution of a zone $z$ and the label distribution of its sampled zones.
\begin{wrapfigure}{r}{0.25\textwidth}
    \begin{center}
\includegraphics[width=0.4\columnwidth]{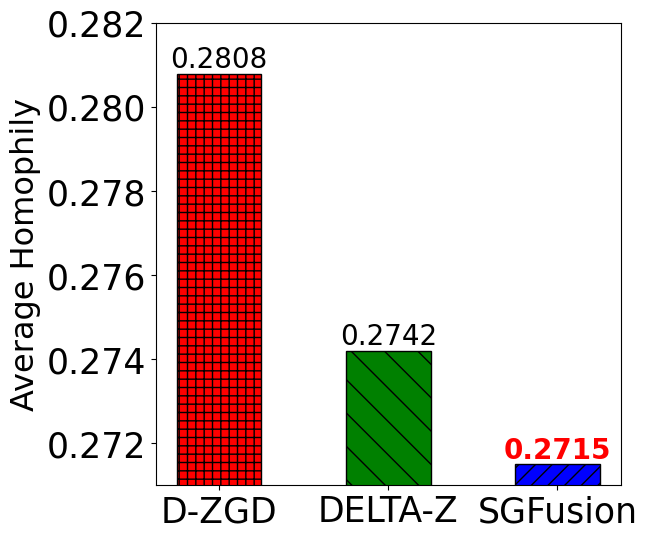}
        \caption{Average homophily of zones across six countries.}
        \label{fig:Homophily}
    \end{center}
\end{wrapfigure}
SGFusion achieves a lower value of average homophily across six countries compared with DELTA-Z and D-ZGD (Figure \ref{fig:Homophily}). \textit{The results highlight that SGFusion obtains a better set of sampled zones for gradient fusion, one key property contributing to the improvement in the model utility of SGFusion without affecting system scalability, demonstrated through its convergence speed.}

\subsection{Stochastic vs. Deterministic} 

\begin{figure}[t]
      \centering
    \includegraphics[scale=0.27]{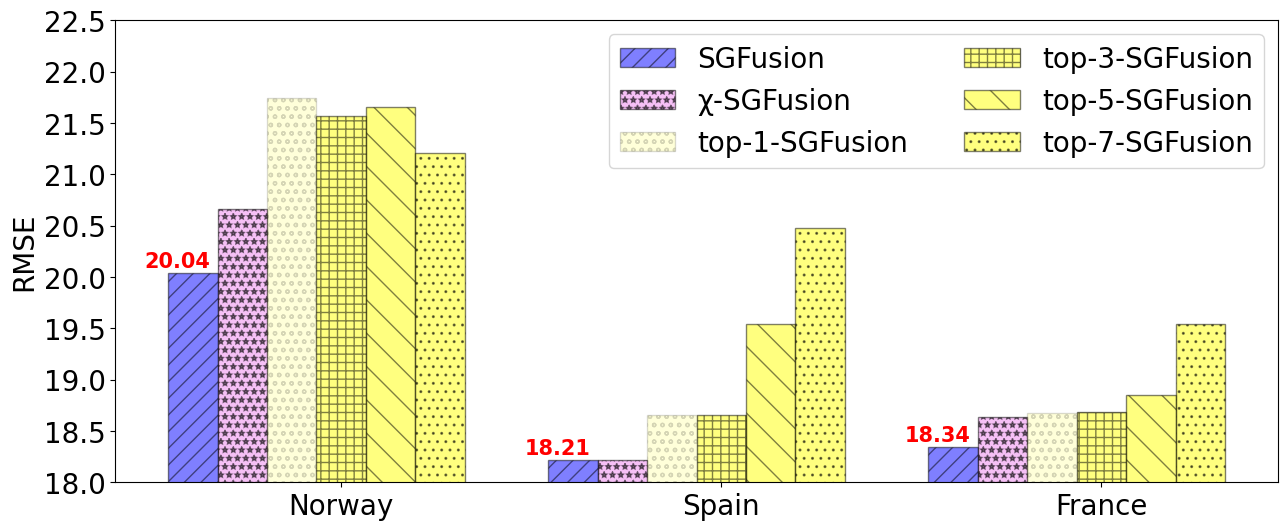} 
      \caption{Deterministic vs. stochastic gradient fusion.}
      \label{fig:StochasticVsDeterministic}
\end{figure}

In this experiment, we investigate the benefit of stochastic gradient fusion by comparing SGFusion with the top-$k$-SGFusion where $k$ varies from 1 to 7, and with $\chi$-SGFusion. The top-$k$-SGFusion is a deterministic gradient fusion approach using the top-$k$ most similar zones to fuse gradients. Meanwhile, $\chi$-SGFusion is a partially stochastic gradient fusion algorithm, in which the number of sampled zones for every zone $z$ is deterministic while the sampled zones can change across training rounds. Therefore, using the top-$k$-SGFusion and $\chi$-SGFusion offers a comprehensive evaluation of the stochastic gradient fusion effect in SGFusion.

In Figure \ref{fig:StochasticVsDeterministic}, SGFusion notably outperforms the top-$k$-SGFusion with $k \in [1, 7]$ and $\chi$-SGFusion. The reason is that using either deterministic values of top-$k$ or deterministic numbers of sampled zones does not offer a good balance between obtaining sufficient knowledge fused from sampled zones and mitigating the increase of discrepancy among fused gradients when the number of sampled zones increases. The stochastic zone sampling approach remedies this problem by enabling knowledge fusion and sharing among zones, such that \textit{``more similar''} zones have \textit{``higher probabilities''} of sharing gradients with \textit{``larger attention weights''} at a training round of a zone-FL model.
Hence, SGFusion reduces better the discrepancy among fused gradients while providing sufficient knowledge from sampled zones to improve zone-FL models.

\section{Discussion}
The granularity of zones is an underexplored aspect in the current setting of SGFusion. 
Zones must not be too large or too small to encompass a sufficient number of users to maintain reliable and high model utility while reducing computational and operational costs. At the same time, the zones must capture localized behavioral differences, such as unique mobility patterns or resource consumption trends. One solution to this problem is to collect user mobility statistics at the edge nodes to identify common mobility patterns and then define the zones based on these patterns.

In addition, the edge infrastructure is another consideration for deploying SGFusion in real-world scenarios. In the current setting, each zone is associated with one edge device, which manages the mobile devices within its boundaries. Assuming the wireless telecom operators will deploy edge devices at scale in the near future, practical deployment of SGFusion may have several edge devices within one zone. A direct solution is to allow the mobile devices to communicate with any edge devices, but only one edge device must be responsible for the zone. The practical deployment will need communication protocols between the edge devices to ensure the correct functionality of SGFusion. We plan to explore these open research questions in our future work towards real-world deployment of SGFusion.

\section{Conclusions}
This paper presented \textbf{SGFusion}, a novel FL training algorithm for geographical zones, that models local data label distribution-based correlations among geographical zones as hierarchical and probabilistic random graphs, optimized by MCMC sampling. At each step, every zone samples a set of zones from its associated probabilistic dendrogram to fuse its local gradient with shared gradients from these zones. SGFusion enables knowledge fusion and sharing among zones in a probabilistic and stochastic gradient fusion process with self-attention weights. 
Theoretical and empirical results show that models trained with SGFusion converge with upper-bounded expected errors and remarkable better model utility without notable cost in system scalability compared with baselines.


\bibliographystyle{IEEEtran}
\bibliography{refTemp}

\end{document}